\newtheorem{theorem}{Theorem}
\newtheorem{assumption}{Assumption}
\newtheorem{heuristic}{Heuristic}
\crefname{section}{Sec.}{Secs.}
\Crefname{section}{Section}{Sections}
\Crefname{table}{Table}{Tables}
\crefname{table}{Tab.}{Tabs.}
\renewcommand\paragraph{\@startsection{paragraph}{4}{\z@}%
  {-0.1\baselineskip}   
  { 0.0\baselineskip}   
  {\normalfont\normalsize\bfseries}}
\begin{document}

\title{Incorporating brain-inspired mechanisms for multimodal learning
\\in artificial intelligence}

\author{
Xiang He\textsuperscript{1}\thanks{Equal Contribution}, 
Dongcheng Zhao\textsuperscript{1,2}\footnotemark[1],
Yang Li\textsuperscript{1},  
Qingqun Kong\textsuperscript{1}\thanks{Corresponding Author}, 
Xin Yang\textsuperscript{3}\footnotemark[2],  
Yi Zeng\textsuperscript{1,2,4}\footnotemark[2]\\
{\small \textsuperscript{1}Brain-inspired Cognitive AI Lab, Institute of Automation, Chinese Academy of Sciences, Beijing, China} \\
{\small \textsuperscript{2}Center for Long-term Al, Beijing, China} \\
{\small \textsuperscript{3}CAS Key Laboratory of Molecular Imaging, Institute of Automation, Chinese Academy of Sciences, Beijing, China} \\
{\small \textsuperscript{4}Key Laboratory of Brain Cognition and Brain-inspired Intelligence Technology, Chinese Academy of Sciences, Shanghai, China} \\
{\tt \small {\{hexiang2021, zhaodongcheng2016, liyang2019, qingqun.kong, xin.yang, yi.zeng\}}@ia.ac.cn}
}

\maketitle


\begin{abstract}
  Multimodal learning significantly enhances the perceptual capabilities of cognitive intelligent systems by integrating information from different sensory modalities. However, existing multimodal fusion researches in the field of artificial intelligence typically assume static integration of modal information, not yet fully incorporating the key dynamic mechanisms of multimodal integration found in the brain. Specifically, when processing multisensory information, the brain exhibits an inverse effectiveness phenomenon, wherein weaker unimodal cues yield stronger multisensory integration benefits; conversely, when individual modal cues are stronger, the effect of modal fusion is relatively diminished. This mechanism enables biological systems to achieve robust cognition even in environments with scarce or noisy perceptual cues. 
  Inspired by this biological inverse effectiveness mechanism in multimodal integration, we explore the intrinsic relationship between multimodal output and information from individual modalities, proposing an inverse effectiveness driven multimodal fusion (IEMF) strategy. By incorporating this inverse effectiveness-driven multimodal fusion strategy into neural network architectures, we achieve not only more efficient multimodal integration with significantly improved model performance, but also substantial computational efficiency gains—demonstrating up to 50\% reduction in computational cost across diverse fusion methods.
  We conduct extensive experiments on audio-visual classification, audio-visual continual learning, and audio-visual question answering tasks to validate the effectiveness of our proposed method. The experimental results consistently demonstrate that our proposed method performs excellently in these multimodal tasks. Furthermore, to verify the universality and generalization capability of our method, we also conduct experiments on two widely used network models in artificial intelligence—Artificial Neural Networks (ANN) and Spiking Neural Networks (SNN)—with results showing good adaptability of the method to both network types.
Our research emphasizes the potential of incorporating biologically inspired neural mechanisms into multimodal neural networks and provides promising new directions and perspectives for the future research and development of multimodal artificial intelligence.
The code is publicly available at \texttt{https://github.com/Brain-Cog-Lab/IEMF}.
\end{abstract}

\begin{figure*}[t]
	\centering
		\includegraphics[width=1.0\linewidth]{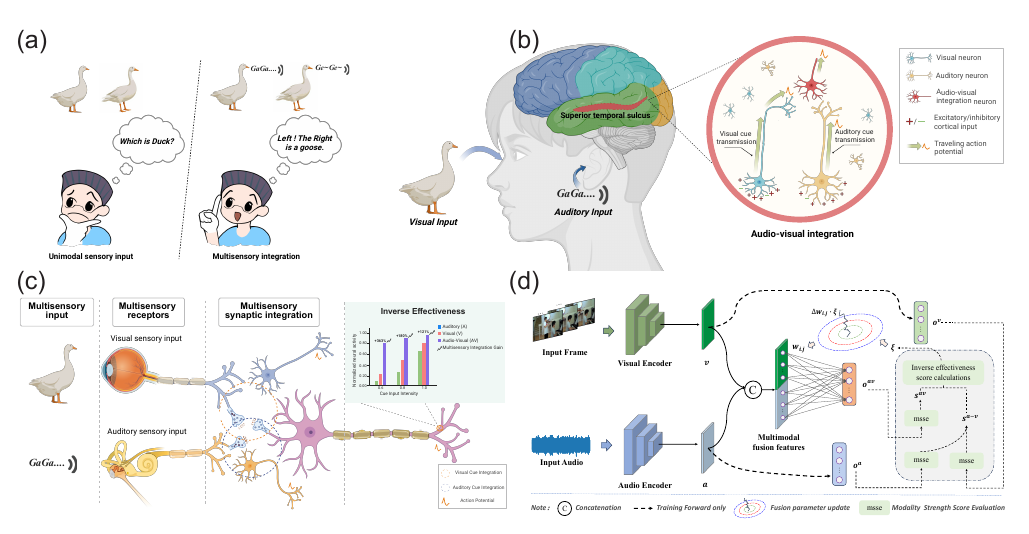}
	\caption{\textbf{Illustration of multisensory integration and the role of inverse effectiveness in IEMF (Inverse Effectiveness driven Multimodal Fusion).} 
  \textbf{(a)} Comparison between unimodal sensory input and multisensory integration: integrating visual and auditory cues reduces ambiguity and uncertainty compared to relying on a single modality.  
  \textbf{(b)} Neural basis of audiovisual integration in the human brain, focusing on the superior temporal sulcus (STS) where visual and auditory inputs converge onto multisensory neurons.
  \textbf{(c)} Biological principle of inverse effectiveness: multisensory integration is strengthened when unimodal signals are weak. Visual and auditory stimuli are processed through distinct sensory pathways and converge at multisensory synapses. The inset illustrates the inverse relationship between unimodal strength and integrative gain. 
  \textbf{(d)} The proposed inverse effectiveness driven multimodal fusion strategy inspired by biological multisensory fusion mechanisms. Visual and auditory inputs are processed by respective encoders, fused via a dynamic fusion module regulated by inverse effectiveness principles, and evaluated using modality strength score estimation. The fusion module weights are dynamically adjusted according to the computed scores.
  (This figure was created with \url{https://BioRender.com}.)}
	\label{fig1}
\end{figure*}

\section{Introduction}
\label{sec:intro}


In natural environments, we typically need to process cues from multiple senses simultaneously to comprehensively construct an understanding of the same concept. For example, understanding the concept of "beach" involves not only visual information (yellow sand, blue sea) but also relies on auditory (sound of waves) and tactile (texture of sand) sensory information. Compared to unimodal information, multimodal information provides richer and more comprehensive representational capacity~\cite{ernst2004merging, noppeney2021perceptual}. As shown in Fig.~\ref{fig1}(a), multimodal integration not only enhances information expressiveness but also effectively reduces uncertainty in single-modal information. This mechanism of multimodal information integration is not only the foundation of biological perception but has also become one of the core challenges in multimodal learning in artificial intelligence.
As information environments become increasingly complex, traditional unimodal learning methods struggle to handle complex and dynamic real-world scenarios. Consequently, neural networks have incorporated multimodal information processing strategies to achieve more robust and efficient information representation. Multimodal neural networks are widely applied in tasks such as multimodal fusion~\cite{nagrani2021attention, peng2022balanced, yu2023brain, jiang2023mammalian, sadaf2023bio}, multimodal emotion recognition~\cite{lv2021progressive, cheng2024emotion}, and audio-visual speech recognition~\cite{mroueh2015deep, kim2021cromm, peng2022balanced, yeo2024akvsr}.
Nevertheless, brain-inspired algorithms remain in the developmental stage, and many biological characteristics and mechanisms have not yet been fully utilized, offering enormous potential and new challenges for the further development of neural network models.


Neurobiological research indicates that vision and audition are the two primary pathways through which humans acquire external information, and their integration significantly enhances perceptual benefits~\cite{bulkin2006seeing, enoch2019evaluating}. This paper therefore focuses primarily on the integration of visual and auditory modal inputs. For visual and auditory information from a common source, the brain has specialized regions responsible for both unimodal information processing and multisensory integration~\cite{macaluso2005multisensory}. After visual and auditory information are received through their respective receptors, features are hierarchically extracted through visual and auditory pathways before being transmitted to multisensory integration brain regions.
Relevant studies show that audiovisual information integration occurring in the cerebral cortex is closely associated with regions such as the superior temporal sulcus~\cite{calvert2000evidence, macaluso2004spatial, noesselt2007audiovisual, van2004integration, senkowski2008look, szycik2008novel}, posterior parietal cortex~\cite{avillac2005reference, regenbogen2018intraparietal}, and prefrontal cortex~\cite{bushara2001neural, barraclough2005integration}. Figure~\ref{fig1}(b) illustrates the convergence and integration process of co-sourced audiovisual signals in the superior temporal sulcus as an example. In this multisensory integration brain region, visual and auditory cues are transmitted through different neural pathways, ultimately converging onto common multisensory integration neurons, facilitating cross-modal information integration and perceptual decision-making.

Brain audiovisual information integration exhibits many interesting mechanisms, with inverse effectiveness being particularly noteworthy. \cite{calvert2000evidence} found that by separately presenting audiovisual combined speech signals and their unimodal information, and conducting cross-modal comparisons, the left superior temporal sulcus (STS) demonstrated the most significant cross-modal integration benefits under conditions where unimodal signals were weakest. The inverse effectiveness mechanism indicates that during multisensory information integration, when unimodal cues are weaker, the effect of multisensory integration is relatively stronger; conversely, when individual modal cues are stronger, the effect of modal fusion is relatively diminished, though multisensory integration responses still exceed the activation response of either single modality \cite{stein2008multisensory, fetsch2013bridging}. Figure~\ref{fig1}(c) illustrates this phenomenon, where inverse effectiveness reflects higher sensitivity to weaker modalities in multimodal integration brain regions, typically manifested as enhanced information integration. This mechanism enables biological systems to enhance perceptual accuracy and stability by strengthening multimodal integration when the quality of information from a single modality is poor.


Inspired by the biological principle of inverse effectiveness, our work reconsiders how multimodal fusion should adapt to variations in unimodal input quality, particularly under dynamic and complex environments. Most existing multimodal fusion methods focus on maximizing information interaction between modalities, yet often overlook the dynamic relationship between the fused output and the respective contributions of each modality. This limitation stems from existing methods typically presetting modal interactions as static fixed patterns, failing to fully consider that different modalities' information contributions should flexibly adjust as environmental conditions change.
Take audiovisual perception as an example: when environmental noise significantly degrades the quality of auditory input, traditional fusion strategies typically retain fixed fusion weights and cannot adaptively modify the cooperative interaction between modalities in accordance with signal degradation, thereby constraining the overall perceptual performance of the system. The phenomenon of inverse effectiveness inspires the insight that an efficient multimodal integration mechanism should actively enhance the responsiveness of the fusion module when the quality of a single modality degrades, so that the system can obtain more compensatory information from other modalities. Based on this insight, we propose that fusion strength should dynamically respond to modality-specific quality fluctuations, that is, the learning rate of the fusion module should be adaptively modulated according to the reliability of unimodal signals, thereby enabling more robust and flexible multimodal perception in complex and evolving environments.

Based on the aforementioned neural mechanisms of multimodal fusion and biological inspiration, this paper adopts deep neural networks as the foundational framework for multimodal perceptual learning, focusing on exploring the cooperative integration process of visual and auditory information.
We propose an inverse effectiveness driven multimodal fusion (IEMF) strategy to enable a more fine-grained fusion mechanism. By quantifying the relationship between the strength of unimodal inputs and the signal strength of multimodal fusion outputs, we adaptively modulate the update rate of the fusion module’s weights. Specifically, we introduce an inverse effectiveness coefficient into the backpropagation process, such that the fusion module accelerates its parameter updates in response to weak unimodal signals to enhance fusion strength, while suppressing updates when unimodal signals are strong, thereby reducing over-reliance on fusion. This design realizes a biologically inspired principle of ``weak modality, strong fusion,'' ensuring that the integration process neither over-depends on a single sensory pathway nor overlooks potentially informative sources. Consequently, the method effectively improves overall perceptual accuracy and model robustness.
Beyond improving overall perceptual accuracy and model robustness, our approach also demonstrates significant computational efficiency gains—reducing training costs by up to 50\% while maintaining superior performance. These dual benefits highlight the computational advantages of inverse effectiveness principles in multimodal integration systems.

Overall, our contributions can be categorized into the following three points:
\begin{itemize}
\item We introduce the inverse effectiveness mechanism into multimodal fusion in deep neural networks for the first time, proposing an inverse effectiveness driven multimodal fusion strategy. This strategy adjusts the parameter update intensity of fusion modules in real-time, enabling the model to enhance its ability to extract information from other modalities when a single modality signal is weak, thereby improving information compensation effects and fusion efficiency.
\item We evaluate our proposed method on two different architectures: Artificial Neural Networks (ANN) and Spiking Neural Networks (SNN). Experimental results demonstrate that IEMF possesses good generality and can be effectively integrated with both types of networks, fully leveraging its mechanism advantages.
\item We conduct systematic empirical studies on multiple standard datasets and tasks, including representative scenarios such as audiovisual speech recognition, audiovisual continual learning, and audiovisual question answering. Experimental results show that our proposed method exhibits stronger perceptual capabilities under various complex conditions. Particularly worth emphasizing is that, as a mechanism, IEMF can seamlessly integrate with various existing state-of-the-art methods and further enhance their performance.
\end{itemize}

\begin{figure*}[t]
	\centering
		\includegraphics[width=0.9\linewidth]{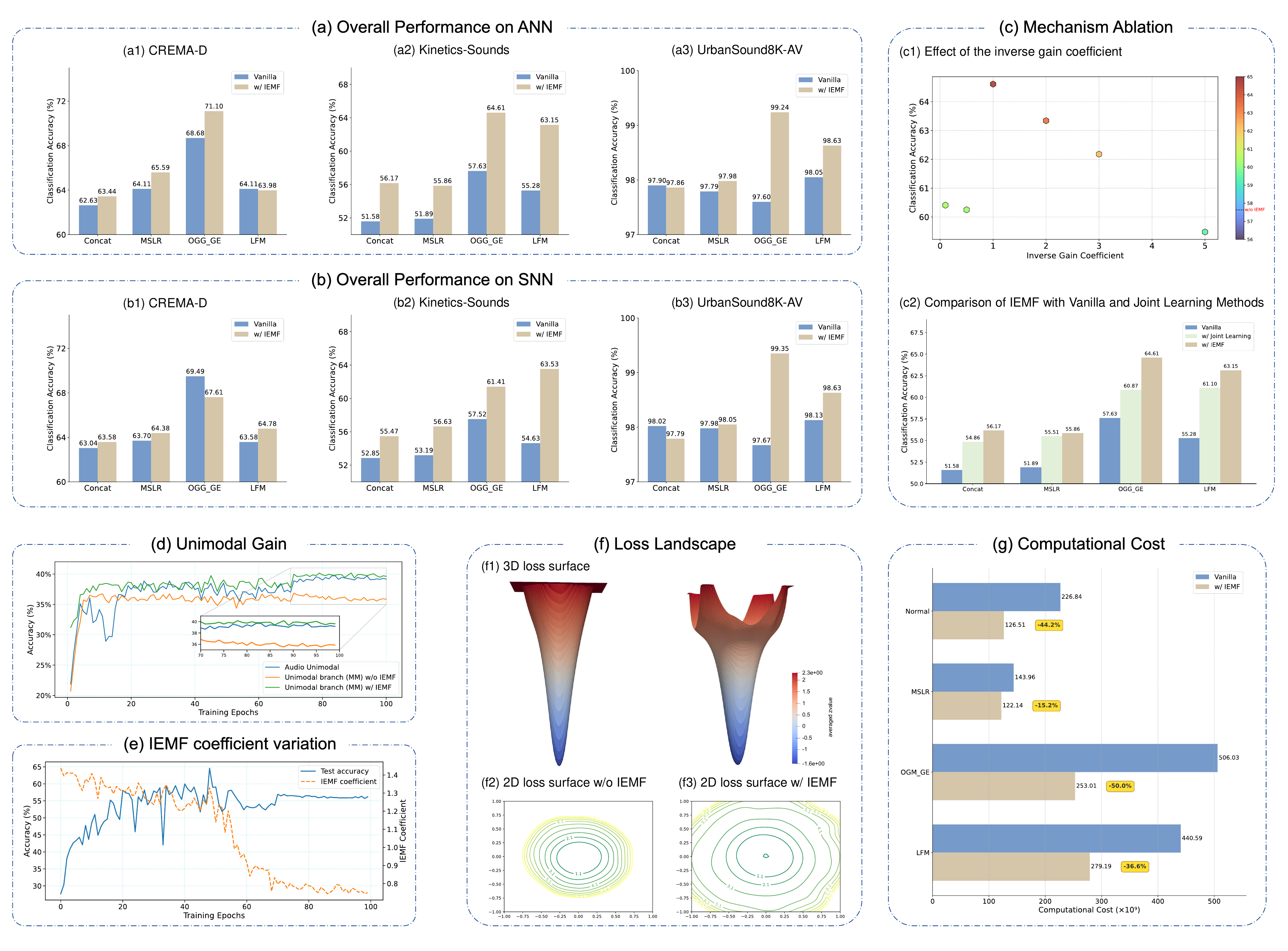}
    \caption{\textbf{Comprehensive evaluation of the proposed inverse effectiveness driven multimodal fusion (IEMF).}
    \textbf{(a)} Overall performance on ANNs.  
    Bar charts compare the vanilla method (\textcolor{blue}{blue}) with the method augmented by IEMF (\textcolor{brown}{khaki}) on three audiovisual classification benchmarks—CREMA-D \textbf{(a1)}, Kinetics-Sounds \textbf{(a2)} and UrbanSound8K-AV \textbf{(a3)}—under four representative fusion schemes (Concat, MSLR, OGM\_GE and LFM).  
    \textbf{(b)} Overall performance on SNNs.  
    Same layout as (a) but using spiking neural networks, demonstrating that IEMF consistently boosts accuracy across network paradigms and datasets \textbf{(b1–b3)}.  
    \textbf{(c)} Mechanism ablation on the Kinetics-Sounds dataset.
    \textbf{(c1)} Effect of the inverse gain coefficient $\gamma$: the baseline without IEMF (grey dashed line on the colour bar) scores below every IEMF setting; model accuracy peaks at $\gamma{=}1$.
    \textbf{(c2)} Removing the IEMF term (``Joint Learning only'') leads to a clear performance drop, highlighting the essential role of the inverse effectiveness multimodal fusion component.
    \textbf{(d)} Unimodal gain analysis.  
    Test accuracy for (i) a unimodal audio model trained alone (``Unimodal''), (ii) the audio branch extracted from a multimodal model without IEMF (``Unimodal branch (MM) w/o IEMF''), and (iii) the audio branch with IEMF (``Unimodal branch (MM) w/ IEMF''). IEMF yields a persistent relative gain for the unimodal branch.  
    \textbf{(e)} Dynamics of the IEMF coefficient.  
    Evolution of the learnt IMEF coefficient $\xi$ (dashed orange, right {\small y}-axis) alongside the test accuracy (solid blue, left {\small y}-axis) during training. At the early stage, the value of $\xi$ is large to accelerate the multimodal integration, while as the network converges, $\xi$ falls back and remains stable to maintain the fusion stability.
    \textbf{(f)} Loss landscape visualization.
\textbf{(f1)} 3D loss surface comparison: vanilla method (left) versus IEMF-enhanced method (right).
\textbf{(f2-f3)} 2D contour plots: without IEMF (f2) versus with IEMF (f3). The IEMF method leads to broader and flatter minima.
\textbf{(g)} Computational cost analysis.
Comparison of computational cost between standard models (\textcolor{blue}{blue}) and IEMF-enhanced models (\textcolor{brown}{khaki}). IEMF significantly reduces computational costs across all fusion methods, with reductions ranging from 15.2\% to 50.0\% (highlighted in yellow percentages).}
	\label{fig2}
\end{figure*}

\section{Results}
\noindent
\textbf{IEMF's generalizability across network architectures}\par
\noindent
A key advantage of IEMF lies in its strong universality across different neural network architectures. To evaluate this characteristic, we integrated IEMF into both Artificial Neural Networks (ANN) and Spiking Neural Networks (SNN), which represent distinctly different information processing paradigms. As shown in Fig.~\ref{fig2}(a) and~\ref{fig2}(b), IEMF consistently improves performance across multiple audio-visual classification benchmark tasks, regardless of the underlying network type. For example, with ANNs, IEMF increased classification accuracy from 51.58\% to 56.17\% under the Concat fusion on the Kinetics-Sounds dataset; similarly with SNNs, IEMF brought stable performance gains, improving model accuracy from 52.85\% to 55.47\%, verifying the robustness and flexibility of our proposed method.

This cross-architecture robustness is particularly important for practical applications, as real-world systems often employ heterogeneous network models due to hardware resource constraints, power limitations, or real-time processing requirements. Notably, IEMF achieved significant benefits even in scenarios where traditional multimodal fusion methods are limited by the sparsity and event-driven characteristics of spiking neural networks. For instance, when using the LFM fusion method on the Kinetics-Sounds dataset, the original SNN accuracy was 54.63\%, slightly lower than the ANN's 55.28\%; however, after introducing IEMF, the SNN classification accuracy surpassed the ANN, reaching 63.53\% compared to the ANN's 63.15\% with IEMF, as seen in Fig.~\ref{fig2}(b2) and Fig.~\ref{fig2}(a2). These results indicate that IEMF is not limited to traditional architectures but provides a universally applicable mechanism for improving multimodal fusion across various neural network architectures.

\bigskip
\noindent
\textbf{IEMF improved the model performance on audio visual classification task}\par
\noindent
We systematically validated the effectiveness of the inverse effectiveness driven multimodal fusion (IEMF) mechanism in audio-visual classification tasks. We evaluated performance differences between baseline models and IEMF-enhanced models across three representative datasets: CREMA-D~\cite{cao2014crema}, Kinetics-Sounds~\cite{arandjelovic2017look}, and UrbanSound8K-AV~\cite{guo2023transformer}, using four mainstream fusion strategies: Concatenation Fusion (Concat), Modality-Specific Learning Rates (MSLR)~\cite{yao2022modality}, On-the-fly Gradient Modulation with Generalization Enhancement (OGM\_GE)~\cite{peng2022balanced}, and Learning Facilitator for Modality gap (LFM)~\cite{yang2024facilitating}. As shown in Fig.~\ref{fig2}(a1–a3), IEMF demonstrates consistent performance improvements across all fusion schemes and datasets.

Specifically, taking IEMF's enhancement to the MSLR method across datasets as an example, on the CREMA-D dataset, the baseline model achieved 64.11\% accuracy using MSLR, which improved to 65.59\% after introducing IEMF, yielding a 1.48\% gain. On the more challenging Kinetics-Sounds dataset, baseline accuracy was 51.89\%, while the IEMF-enhanced model reached 55.86\%, a 3.97\% improvement. Even on the UrbanSound8K-AV dataset where the baseline model already achieved high accuracy of 97.79\%, IEMF further improved it to 97.98\%. Though limited in magnitude, this improvement remains practically significant given the already high performance level.

It should be noted that in some cases, performance gains after introducing IEMF were relatively small, and in isolated instances even showed slight decreases (e.g., UrbanSound8K-AV dataset with Concat fusion strategy, Fig.~\ref{fig2}(a3)). This phenomenon can primarily be attributed to: when baseline models already optimally leverage complementary audio-visual information in clear, low-noise environments, the existing modal contribution ratios are already near-optimal, naturally diminishing the benefits of dynamic adjustment and occasionally introducing slight perturbations due to additional modeling freedom. Therefore, IEMF's performance improvement potential is relatively limited in high-baseline, low-interference environments; whereas in environments with fluctuating modal signal quality or noise interference, IEMF's adaptive regulation mechanism demonstrates more significant advantages.

Looking at overall trends, IEMF consistently improves performance across datasets and fusion strategies, confirming its effectiveness in enhancing multimodal fusion efficiency. Unlike traditional fusion methods, IEMF dynamically adjusts modal fusion module weights based on each modality's relative strength. When information in one modality (e.g., audio) decreases due to noise or distortion, IEMF promotes greater information compensation from the fusion module, improving overall perceptual accuracy and model robustness. This dynamic adaptive mechanism significantly enhances model robustness when facing input quality fluctuations and environmental uncertainties.

To further validate IEMF's effectiveness, we conducted mechanism ablation experiments (Fig.~\ref{fig2}(c1)-(c2)). In (c1), we analyzed classification accuracy changes under different inverse gain coefficient $\gamma$ settings. Results show appropriate inverse gain effectively improves model performance, with optimal accuracy at $\gamma=1$, indicating IEMF effectively balances unimodal and multimodal fusion signal contributions to maximize dynamic compensation. With larger coefficients (e.g., $\gamma=5$), accuracy decreases, likely due to training instability from excessive modulation intensity. Conversely, without IEMF (w/o IEMF baseline in Fig.~\ref{fig2}(c1)), classification accuracy is notably lower than all inverse gain coefficient settings, further validating the crucial role of IEMF in enhancing multimodal fusion.
In (c2), we compared baseline models (Vanilla), models with joint learning strategy, and IEMF-enhanced models. We specifically included joint learning comparison to systematically evaluate IEMF's effectiveness. Joint learning adds independent classification heads for each modality without introducing new modalities, enhancing unimodal feature discriminability. IEMF dynamically modulates fusion module updates based on unimodal-fusion signal strength relationships for more refined compensation—mechanistically different approaches. Results show that while joint learning provides performance improvements, IEMF further enhances model performance, validating IEMF's superior generalizability through dynamic fusion module adjustment in existing multimodal learning frameworks.

We further evaluated how multimodal learning affects performance of weaker modality branches (audio) on Kinetics-Sounds using OGM\_GE fusion (Fig.~\ref{fig2}(d)). After multimodal training, we fine-tuned the audio branch to analyze fusion effects on unimodal perception. Results show traditional fusion methods (orange curve) lead to overfitting, limiting performance gains and even underperforming independently trained unimodal branches (blue curve). This suggests modal interference in conventional fusion degrades unimodal feature quality and perception.
In contrast, with IEMF (green curve), the unimodal branch maintains higher, more stable accuracy throughout training with significant early performance advantages. This confirms IEMF not only optimizes multimodal fusion but effectively mitigates modal interference, promoting better unimodal feature learning and generalization.

Figure~\ref{fig2}(e) shows the test‐set evolution of IEMF dynamic coefficients $\xi$ and classification accuracy across training epochs. IEMF adapts fusion module behavior based on fusion effectiveness: during early training, fusion benefits are greater, keeping dynamic coefficient $\xi$ high to maximize multimodal advantages; as unimodal features mature and fusion advantages diminish, $\xi$ naturally decreases, reflecting reduced fusion dependency and helping maintain module stability for consistent test performance.

To validate the generalization properties of our proposed method, we visualized the loss landscapes of models with and without IEMF as shown in Fig.~\ref{fig2}(f). The 3D loss surface visualization illustrated in Fig.~\ref{fig2}(f1) reveals significant topological differences: the baseline method exhibits a sharper, cone-like minimum, while the IEMF-enhanced model displays a broader, more gradual basin structure. This distinction is further emphasized in the 2D contour plots depicted in Fig.~\ref{fig2}(f2-f3): without IEMF, the contours form elongated elliptical patterns, indicating inconsistent curvature across different parameter directions; with IEMF, contours appear more circular and uniformly spaced, confirming a significantly flatter minimum region. These observations closely align with our subsequent theoretical analysis presented later in this paper, which demonstrates that IEMF directs the optimization process toward flatter regions of the loss landscape, a characteristic directly associated with the improved generalization performance observed in our experimental results.

Beyond performance improvements, we analyzed IEMF's impact on computational cost as shown in Fig.~\ref{fig2}(g). Our evaluation employed a comprehensive computational cost metric that balances both convergence speed and per-epoch complexity, providing a more holistic assessment of algorithmic efficiency. Across all fusion methods, IEMF consistently reduces computational costs by significant margins. The computational savings range from 15.2\% for MSLR to 50.0\% for OGM\_GE, with Normal and LFM configurations showing reductions of 44.2\% and 36.6\%, respectively. These substantial improvements stem from IEMF's ability to achieve faster convergence while maintaining reasonable per-epoch complexity. By dynamically modulating fusion behavior based on modality contributions, IEMF effectively reduces the total computational budget required to reach optimal performance. Importantly, these efficiency gains occur concurrently with the performance enhancements reported earlier, demonstrating that IEMF not only improves model accuracy but also significantly optimizes computational resource utilization—a critical advantage for resource-constrained multimodal applications in real-world environments.

In summary, IEMF demonstrates consistent performance improvements across datasets and fusion strategies. Systematic experiments validate its effectiveness in dynamically regulating fusion, mitigating modal interference, enhancing unimodal learning, and improving robustness, providing an efficient and well-generalizing fusion strategy for multimodal perception tasks.

\begin{figure*}[t]
	\centering
		\includegraphics[width=0.85\linewidth]{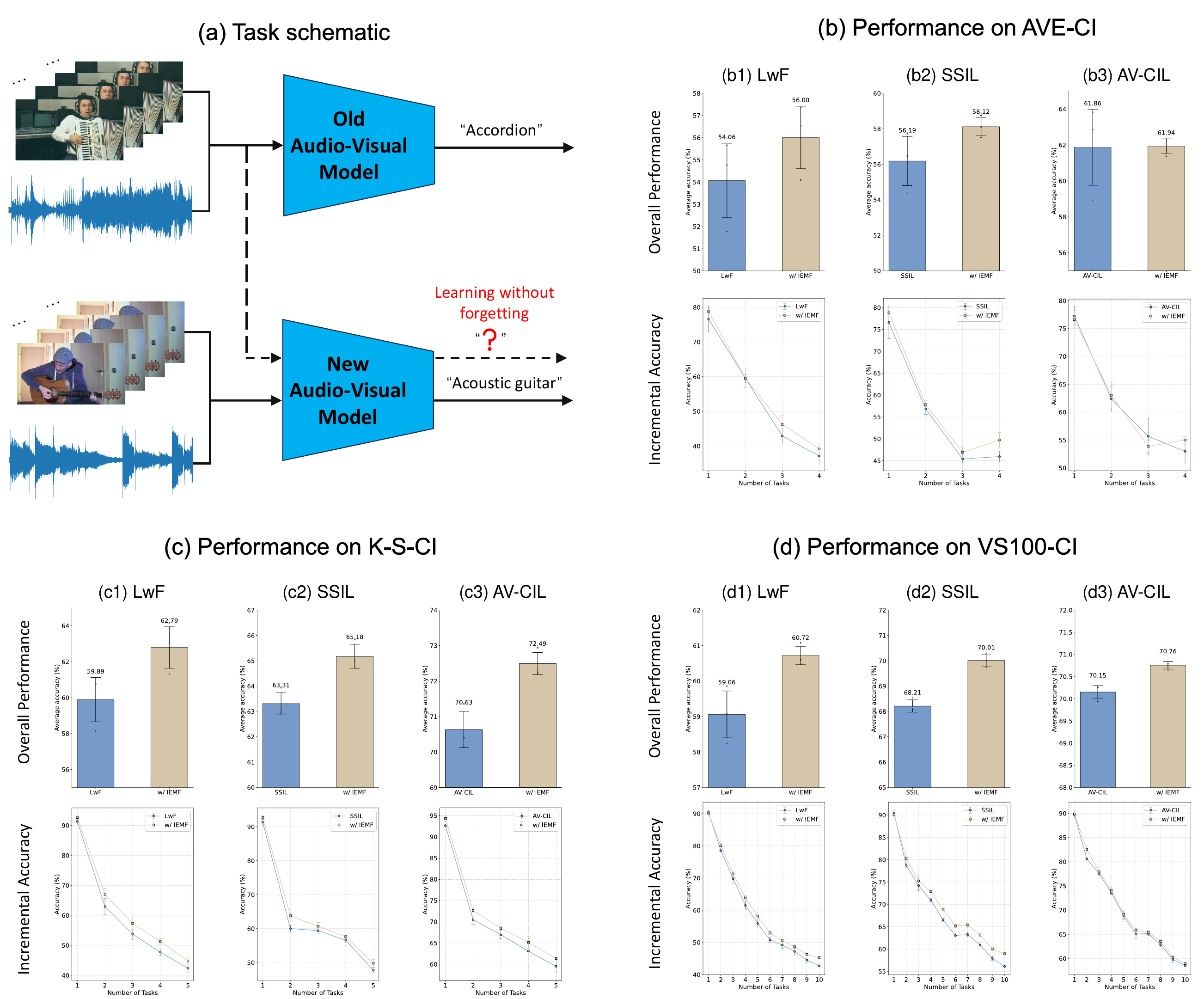}
    \caption{\textbf{Inverse effectiveness driven multimodal fusion boosts audio visual continual learning.}
    \textbf{(a) Task schematic.}  A single audiovisual model is incrementally updated as new classes arrive; the goal is to absorb the new knowledge while preserving performance on previously learned classes—achieving ``learning without forgetting''.  
    \textbf{(b) Results on AVE-CI}, \textbf{(c) K-S-CI} and \textbf{(d) VS100-CI}.  
    For three representative class incremental learning baselines—LwF, SSIL and AV-CIL—we compare the vanilla method (\textcolor{blue}{blue}) with the method augmented by IEMF (\textcolor{brown}{khaki}).  
    Each sub-panel is split into top and bottom:  
    the top bar chart reports the \emph{overall performance} (mean accuracy across all tasks, error bars denote one standard deviation),  
    while the bottom line plot traces the \emph{incremental accuracy} after each successive task.  
    Across all datasets and baselines, IEMF consistently increases mean accuracy and yields a flatter accuracy-decay curve, indicating the better knowledge transfer.}
	\label{fig3}
\end{figure*}

\bigskip
\noindent
\textbf{IEMF improved the model performance on audio visual continual learning}\par
\noindent
To evaluate the effectiveness of IEMF in more challenging scenarios, we further examined its performance in audio-visual continual learning tasks. In such tasks, models need to learn new categories continuously while preserving recognition capabilities for previously learned categories as much as possible, thereby avoiding catastrophic forgetting, as shown in Fig.~\ref{fig3}(a). We selected three representative class-incremental learning baseline methods for comparison: LwF~\cite{li2017learning}, SSIL~\cite{ahn2021ss}, and AV-CIL~\cite{pian2023audio}, and evaluated them on three audio-visual continual learning datasets: AVE-CI, K-S-CI, and VS100-CI~\cite{pian2023audio}. The experimental results are shown in Fig.~\ref{fig3}(b-d).

After introducing the IEMF method, the models achieved stable accuracy improvements across all datasets.
On AVE-CI, LwF increased from 54.06 \% to 56.00 \% (+1.94 \%), SSIL improved from 56.19 \% to 58.12 \% (+1.93 \%), and AV-CIL slightly increased from 61.86 \% to 61.94 \% (+0.08 \%).
In K-S-CI, which features more cross-modal noise, LwF rose from 59.89 \% to 62.79 \% (+2.90 \%), SSIL improved from 63.31 \% to 65.18 \% (+1.87 \%), and AV-CIL increased from 70.63 \% to 72.49 \% (+1.86 \%).
For the largest scale dataset VS100-CI, LwF improved from 59.06 \% to 60.72 \% (+1.66 \%), SSIL from 68.21 \% to 70.01 \% (+1.80 \%), and AV-CIL from 70.15 \% to 70.76 \% (+0.61 \%).
All nine comparisons showed positive gains, with an average improvement of approximately 1.63 percentage points, highlighting the consistent effectiveness of IEMF.

In Fig.~\ref{fig3}(b-d), the line graphs in the bottom row of each subfigure show the average accuracy changes after each continuous task. Notably, compared to baseline models, the accuracy decline curves of IEMF models are significantly more gradual. This indicates that IEMF enhances the model's ability to retain existing knowledge during cross-task knowledge transfer while effectively integrating information about new categories, thereby significantly mitigating catastrophic forgetting.

To further understand the internal mechanisms behind IEMF's performance improvements, we analyzed its fusion dynamic behavior during training. IEMF does not explicitly introduce learnable parameters bound to specific tasks, but rather adaptively regulates the update dynamics of the fusion module based on changes in the effectiveness of unimodal and multimodal signals, thereby implicitly adapting to modal variations across different task stages during continuous learning. Through the training process guided by the inverse effectiveness principle, the model can naturally adapt to changes in modal reliability during weight updates, thus reducing over-reliance on a single modality when perceptual conditions fluctuate. Benefiting from this adaptive optimization strategy, IEMF not only improves the average accuracy across all tasks but also maintains a smoother performance degradation curve, preserving high overall performance and cross-task knowledge coherence even as new tasks are continuously introduced.

\bigskip
\noindent
\textbf{IEMF improved the model performance on audio visual question answering}\par
\noindent
We further evaluated the effectiveness of IEMF in audio visual question answering (AVQA) tasks. In this task, models must answer text questions based on synchronized audio and video inputs, demanding higher capabilities for deep integration of multimodal information. As shown in Fig.~\ref{fig4}, radar charts ~\ref{fig4}(a1) and ~\ref{fig4}(b1) compare the classification accuracy of baseline models versus models with IEMF, and ST-AVQA~\cite{li2022learning} models versus models with IEMF, across different question types (audio-only questions, visual-only questions, and audio-visual combined questions).

\begin{figure*}[t]
	\centering
		\includegraphics[width=0.9\linewidth]{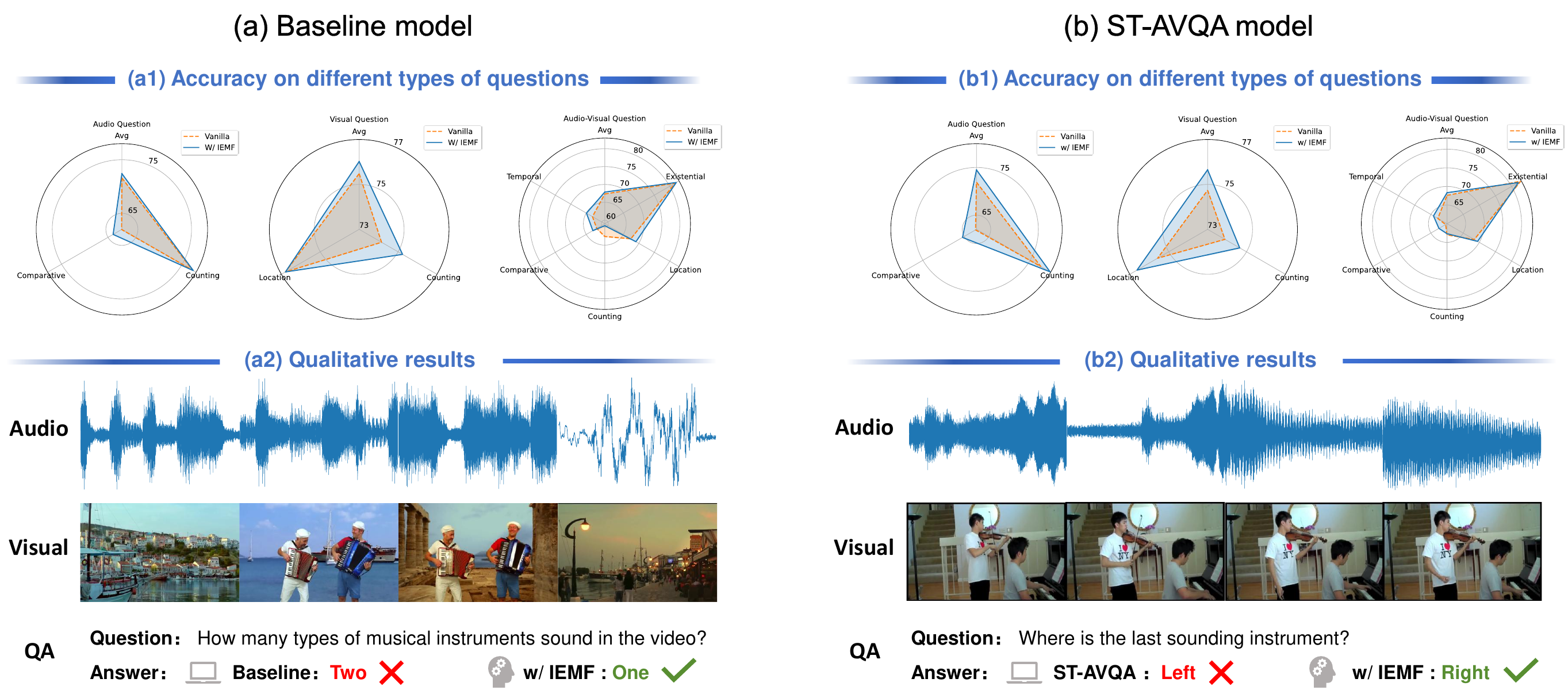}
    \caption{\textbf{Quantitative and qualitative impact of IEMF on audio visual question answering task.} \textbf{(a)} Baseline model.
    The three radar charts (top) report accuracy on audio-only, visual-only, and audio-visual questions, respectively.
    \textcolor{orange}{Orange} = vanilla, \textcolor{blue}{Blue} = w/ IEMF.
    The bottom row shows a representative sample—waveform, video frames, and question/answer—where the vanilla fusion miscounts the instruments (``Two''), whereas IEMF answers correctly (``One'').
    \textbf{(b)} ST-AVQA model.
    The same layout as (a), but using the stronger ST-AVQA model.
    IEMF again enlarges the radar area for every question type and corrects the localization query in the illustrated example (vanilla: ``Left''; w/ IEMF: ``Right'').
    Across both models, the blue polygons consistently enclose the orange ones, confirming that the inverse effectiveness driven multimodal fusion mechanism improves all question categories while providing intuitive per-sample gains.}
	\label{fig4}
\end{figure*}

Comparing the radar charts of original models and models with IEMF, we can observe that IEMF improved answer accuracy across all question types. Taking the ST-AVQA model and its IEMF-enhanced version as an example (Fig.~\ref{fig4}(b)), for audio-only questions, the original ST-AVQA model achieved an average accuracy of 71.90\%, while the IEMF model improved to 74.49\%, an increase of 2.59\%. Similarly, for visual-only questions, the baseline accuracy was 74.74\%, while the IEMF-enhanced model reached 75.65\%, an improvement of 0.91\%. For audio-visual questions, the vanilla model's average accuracy was 67.61\%, while the IEMF model achieved 68.33\%, an improvement of 0.72\%.
To verify IEMF's performance on fine-grained questions, we specifically analyzed its effectiveness in tasks requiring precise localization classification. As shown in Fig.~\ref{fig4}(b2), the original ST-AVQA model incorrectly predicted ``left'' side when answering ``the position of the last sounding instrument'', while the model with IEMF correctly located it as ``right'' side. This demonstrates that IEMF-enhanced models possess stronger fine-grained discrimination capabilities in complex cross-modal reasoning tasks, improving the integration efficiency of multimodal cues.

This improvement further validates IEMF's crucial advantages in handling noisy interference or incomplete input information scenarios. By dynamically adjusting the update rate of the fusion module during training based on the strength of unimodal and multimodal fusion signals, IEMF guides the model to learn strategies that can more robustly integrate different modal information when modal signal strengths are uneven or information is contradictory. In contrast, models without IEMF are more prone to judgment biases when facing modal conflicts or input uncertainties, leading to incorrect answers or overall performance degradation. Overall, these results highlight IEMF's important role in enhancing multimodal understanding and reasoning.

\section{Discussion}
\noindent
\textbf{Biological insights into multimodal integration}\par
\noindent
Despite significant advances in multimodal fusion, many key biological principles have not been fully explored and applied in artificial intelligence systems, which could further enhance the robustness and adaptability of multimodal systems. In this study, we propose a brain-inspired deep neural network multimodal integration method based on the inverse effectiveness mechanism observed in biological systems, providing verifiable theoretical foundations and methodological support for multimodal information integration. Specifically, we propose an inverse effectiveness driven multimodal fusion (IEMF) method that dynamically adjusts the weights of modal fusion modules based on the relationship between the strength of single-modal cues and the signal strength after modal fusion. Our approach systematically considers the complementary interactions between modalities, significantly improving not only the performance and generalization capabilities of multimodal systems but also their computational efficiency. This dual advantage—enhanced robustness coupled with reduced computational costs—offers insights into why inverse effectiveness might have evolved as a critical mechanism in biological systems, where both perceptual reliability and metabolic efficiency are under evolutionary pressure.

In the IEMF framework, we prioritize inverse effectiveness in the network training process by introducing an inverse effectiveness gain coefficient that applies gradient regulation to fusion weights, naturally forming an internal bias of ``weak modality-high gain, strong modality-low gain'' during the learning phase. This strategy aligns with the physiological mechanism of cross-modal experience shaping plasticity in early neural development in biological organisms, where newborn individuals initially lack multisensory integration abilities. These abilities are not innate but develop through continuous shaping of neural circuits via early cross-modal experiences, adapting to the environment and optimizing multimodal integration performance~\cite{stein2014development}. Furthermore, we apply inverse effectiveness driven fusion strategies uniformly across all input channels of the model, consistent with the view in~\cite{regenbogen2018intraparietal} that inverse effectiveness manifests not only under degraded stimulus conditions but also with clear stimuli.

This research confirms the long-standing intuition that how modalities are combined is as important as how many are combined. By introducing inverse effectiveness rules from cortical circuits into gradient-based optimization learning systems, we achieved: (i) effective generalization in both Artificial Neural Networks (ANN) and Spiking Neural Networks (SNN) models; (ii) significant performance improvements in audio-visual classification, audio-visual continual learning and audio-visual question answering tasks; and (iii) substantial computational efficiency gains with up to 50\% reduction in computational costs across diverse fusion methods. Systematic experiments demonstrate that after integrating the IEMF mechanism into existing multimodal methods, models achieved performance superior to original state-of-the-art techniques across various multimodal tasks, further indicating that introducing bio-inspired mechanisms can effectively improve the efficiency of multimodal integration, expanding its potential in artificial intelligence applications. These findings not only highlight the advantages of incorporating biological principles into machine learning models but also provide new directions for future research in neuromorphic computing and multisensory integration.

\bigskip
\noindent
\textbf{Other biological mechanisms of multimodal integration}\par
\noindent
It is worth discussing that while this research emphasizes the importance of inverse effectiveness in multimodal integration, it is worth noting that there are two other equally important principles in multimodal biological perception processes: temporal congruence and spatial congruence. These mechanisms are particularly important in dynamic multimodal integration.
Temporal congruence refers to visual and auditory inputs maintaining coordination in time, thereby optimizing perceptual and decision-making performance. Experimental studies~\cite{slutsky2001temporal, hairston2006auditory} demonstrate that when visual and auditory stimuli are presented in close synchronization within a 0-200 millisecond time window, they significantly enhance the accuracy and reaction speed of perceptual judgments. In contrast, temporal asynchrony leads to decreased activation intensity in relevant brain regions, weakening the integration effect.
Spatial congruence refers to different sensory modalities maintaining consistency or proximity in spatial location, thereby enhancing the joint representation of cross-modal signals. Research has found that multisensory neurons (such as neurons in the superior colliculus) exhibit integration enhancement effects only when audiovisual stimuli originate from the same or adjacent spatial locations; otherwise, integration may be inhibited or show no integration response~\cite{stein1993merging}. For example, in real-world scene understanding, object recognition, and tracking tasks, accurately matching sound sources with corresponding visual objects is key to successful multimodal perception. Temporal and spatial congruence are crucial for accurate multimodal integration.

Although temporal and spatial congruence are indispensable in biological perception, given that the tasks selected in this study inherently possess strong input synchronization characteristics (i.e., dual-modal inputs from the same source at the same moment), we did not explicitly model these mechanisms in the current work. Specifically, the sensory inputs in this study's tasks naturally possess synchronization and correspondence relationships; therefore, these congruence factors have already been implicitly considered in the multimodal fusion process.
Looking forward, further research could explore how to explicitly incorporate temporal and spatial congruence into the IEMF framework by introducing asynchronous, spatially disparate multimodal input samples, thereby training models to effectively integrate under more complex temporal and spatial variation conditions, further advancing biologically inspired multimodal learning systems toward broader application domains.

\section{Materials and methods}
\noindent
\textbf{Neuron models in ANNs and SNNs}\par

\noindent
In neural networks, the information flow is governed by the dynamics of neuronal activation. In this work, we adopt two distinct neuron models: the continuous artificial neurons used in artificial neural networks (ANNs) and the spike-based neurons in spiking neural networks (SNNs).


In ANNs, information is processed continuously. Each neuron computes a weighted sum of its inputs and applies a nonlinear activation function to produce its output: $ y = f(Wx + b)$, where $x$ is the input vector, $W$ is the weight matrix, $b$ is the bias term, and $f(\cdot)$ denotes a nonlinear function such as ReLU~\cite{nair2010rectified} or sigmoid.

In contrast, SNNs more closely mimic biological neurons by communicating via discrete spike events. We employ the widely used Leaky Integrate-and-Fire (LIF) model~\cite{dayan2005theoretical} to capture the membrane potential dynamics. Upon receiving a synaptic input current $I(t)$, the membrane potential $U(t)$ accumulates over time. When $U(t)$ crosses a threshold $U_{\text{th}}$, a spike is emitted and the potential is reset to the resting value $U_{\text{rest}}$. The continuous-time dynamics of the LIF neuron are given by: 
\begin{equation} 
  \tau_m \frac{d U(t)}{dt} = -\left(U(t) - U_L\right) - \frac{g_{E \mid I}}{g_L} (U(t) - U_{E \mid I}) + \frac{I_{\text{s}}}{g_L}, 
\end{equation} 
where $\tau_m = C_m / g_L$ is the membrane time constant, $C_m$ is membrane capacitance, $g_L$ is the leak conductance, and $g_{E \mid I}$ and $U_{E \mid I}$ denote the conductance and reversal potentials for excitatory or inhibitory synapses. $I_{\text{s}}$ is the synaptic input current.
To simplify the formulation, we aggregate the synaptic terms into an effective input current: $R I(t) \triangleq - \frac{g_{E \mid I}}{g_L} (U(t) - U_{E \mid I}) + \frac{I_{\text{s}}}{g_L}$, reducing the membrane potential equation to: 
\begin{equation} 
  \tau_m \frac{dU(t)}{dt} = - (U(t) - U_{\text{rest}}) + R I(t). 
\end{equation}
For numerical simulation, we set $U_{\text{rest}} = 0$ and discretize the above dynamics. To clearly distinguish continuous and discrete states, we denote membrane potential as $\mathbf{u}^t$ at discrete time step $t$. The complete discrete-time update of the membrane potential and spike generation at layer $l$ is: 
\begin{equation}
  \left\{
  \begin{aligned}
  &\mathbf{u}_{\text{pre}}^{t, l} = \tau \mathbf{u}^{t-1, l} + \mathbf{W}^l \mathbf{s}^{t, l-1}, &\footnotesize{\text{(accumulation)}} \\
  &\mathbf{s}^{t, l} = H\left( \mathbf{u}_{\text{pre}}^{t, l} - \mathbf{u}_{\text{th}} \right), &\footnotesize{\text{(spike firing)}} \\
  &\mathbf{u}^{t, l} = \mathbf{u}_{\text{pre}}^{t, l} \cdot \left(1 - \mathbf{s}^{t, l} \right), &\footnotesize{\text{(reset mechanism)}}
  \end{aligned}
  \right.
  \end{equation}
where $\mathbf{W}^l$ is the weight matrix from layer $l-1$ to $l$, $\mathbf{s}^{t, l-1}$ denotes the spike train from the previous layer at time $t$, $\tau = 1 - \frac{1}{\tau_m}$ is the leak factor controlling the temporal decay of the membrane potential, and $H(\cdot)$ is the Heaviside step function used to generate binary spike outputs.

\bigskip
\noindent
\textbf{Multimodal integration formulation}\par
\noindent
We denote a multimodal input as $\boldsymbol{x} = (\boldsymbol{x}^a, \boldsymbol{x}^v)$, where $\boldsymbol{x}^a \in \mathcal{X}^a$ and $\boldsymbol{x}^v \in \mathcal{X}^v$ represent inputs from two modalities (e.g., audio and visual). These are independently processed by two encoders $\varphi^a(\cdot; \boldsymbol{\theta}^a)$ and $\varphi^v(\cdot; \boldsymbol{\theta}^v)$ to obtain modality-specific latent representations: 
\begin{equation} 
  \boldsymbol{z}^a = \varphi^a(\boldsymbol{x}^a; \boldsymbol{\theta}^a), \quad \boldsymbol{z}^v = \varphi^v(\boldsymbol{x}^v; \boldsymbol{\theta}^v), 
\end{equation} 
where $\boldsymbol{\theta}^a$ and $\boldsymbol{\theta}^v$ represent the trainable parameters of the audio and visual encoders, respectively.
The extracted features are fused using a general fusion operator $\mathcal{F}(\cdot, \cdot)$, i.e., 
$
  \boldsymbol{z}^{\text{av}} = \mathcal{F}(\boldsymbol{z}^a, \boldsymbol{z}^v), 
$
followed by a classifier $h(\cdot; \boldsymbol{\theta}^h)$ that maps the fused audio visual features to a prediction
$
  \hat{\boldsymbol{y}} = h(\boldsymbol{z}^{\text{av}}; \boldsymbol{\theta}^h), 
$
where $\boldsymbol{\theta}^h$ denotes the classifier parameters.

In the widely adopted vanilla fusion strategy, such as feature concatenation, the fusion operator takes the form: 
\begin{equation} 
  \mathcal{F}(\boldsymbol{z}^a, \boldsymbol{z}^v) = \mathbf{W}^{f} \left[ \boldsymbol{z}^a ; \boldsymbol{z}^v \right] + \mathbf{b}^{f}, 
\end{equation} 
where $[\cdot ; \cdot]$ denotes the concatenation operation, $\mathbf{W}^{f}  \in \mathbb{R}^{M \times (d_a + d_v)}$ and $\mathbf{b}^{f} \in \mathbb{R}^M$ are the parameters of the fusion layer, and $M$ is the number of output classes.

The multimodal learning goal is to train a multimodal model $f_{\boldsymbol{\theta}}: \mathcal{X}^a \times \mathcal{X}^v \rightarrow \mathcal{Y}$, where the learnable parameters $\boldsymbol{\theta} = \{\boldsymbol{\theta}^a, \boldsymbol{\theta}^v, \boldsymbol{\theta}^h\}$, that minimizes the empirical risk over a dataset $\mathcal{D} = {(\boldsymbol{x}_i^a, \boldsymbol{x}_i^v, y_i)}_{i=1}^N$. 
The training objective is: 
\begin{equation} 
  \arg\min_{\boldsymbol{\theta}} \; \mathcal{L}(\boldsymbol{\theta}) = \frac{1}{N} \sum_{i=1}^N \mathcal{L}_{\text{ce}}\left(f_{\boldsymbol{\theta}}(\boldsymbol{x}_i^a, \boldsymbol{x}_i^v), y_i\right), 
\end{equation} 
where $\mathcal{L}_{\text{ce}}(\cdot)$ denotes the cross-entropy loss function.

\bigskip
\noindent
\textbf{Inverse effectiveness driven multimodal fusion}\par
\noindent
Most previous studies have focused on designing sophisticated fusion blocks, refining the joint representation $\boldsymbol{z}^{\text{av}}$ to enhance cross‑modal interactions, yet they rarely consider how the relative informativeness of each unimodal stream relative to the fused output should guide the fusion module. In this work, we draw inspiration from the principle of inverse effectiveness. 
Instead of employing static cross‑modal integration, we adaptively adjust the update rate of the fusion module’s weights by contrasting the estimated information content of each unimodal branch with that of the integrated multimodal representation.

First, for each sample $i$ in a mini‑batch $\mathcal{B}_t$, we evaluate the per‑sample modal information content $c_i$ according to the following equation:
\begin{equation}
  \begin{aligned}  
    c_i^{a} &= \bigl[\mathbf{p}_i^{a}\bigr]_{y_i},  
    \quad &\mathbf{p}_i^{a} &=  
    \boldsymbol{\pi}\!\Bigl(\mathbf{W}_t^{a} \cdot \boldsymbol{z}^a + \mathbf{b}_t^a\Bigr),\\  
    c_i^{v} &= \bigl[\mathbf{p}_i^{v}\bigr]_{y_i},  
    \quad &\mathbf{p}_i^{v} &=  
    \boldsymbol{\pi}\!\Bigl(\mathbf{W}_t^{v} \cdot \boldsymbol{z}^v + \mathbf{b}_t^v\Bigr),  
    \end{aligned}
    \label{ic-1}
\end{equation}
where $\mathbf{W}_t^{a/v}$, $\mathbf{b}_t^{a/v}$ are the parameters of the audio/visual modal classification heads, respectively,
and $[\mathbf{p}]_{y_i}$ picks the probability assigned to the ground‑truth label $y_i$. $\boldsymbol{\pi}$ is a normalization function, and here we choose the softmax function. The informativeness of the multimodal output is estimated in the same way:
\begin{equation}
  c_i^{\text{av}} = \bigl[\mathbf{p}_i^{\text{av}}\bigr]_{y_i},  
  \quad \mathbf{p}_i^{\text{av}} =  
  \boldsymbol{\pi}\!\Bigl(\mathbf{W}_t^{av} \cdot \boldsymbol{z}^{\text{av}} + \mathbf{b}_t^{av}\Bigr),
  \label{ic-2}
\end{equation}
Next, we average the evaluated values in \eqref{ic-1} and \eqref{ic-2} to obtain the batch‑level modality‑strength scores.
\begin{equation}
  S_t^{a-v}=\frac{1}{2|\mathcal{B}_t|}
  \sum_{i\in\mathcal{B}_t}\bigl(c_i^{a}+c_i^{v}\bigr),
  \quad
  S_t^{av}=\frac{1}{|\mathcal{B}_t|}
  \sum_{i\in\mathcal{B}_t} c_i^{av}.
  \end{equation}
Following the biological observation that fusion should dominate, and provides the greatest benefit when unimodal evidence is weak, we define a inverse effectiveness driven multimodal fusion coefficient $\xi_t$.
The IEMF coefficient quantifies how effective the unimodal branches are relative to the fused output. We map $\xi_t$ to a bounded value:
\begin{equation}
  \label{eq:kappa}
  \xi_t = \gamma \cdot \left(1 + \kappa\,\bigl(1- \frac{S_t^{a-v}}{S_t^{av}}\bigr)\right),
  \qquad
  \gamma>0,
  \end{equation}
  where $\gamma$ is the inverse gain coefficient controlling the overall magnitude of fusion modulation, and $\kappa(\cdot)$ denotes a generic bounded gating function, in this work we instantiate $\kappa$ with the hyperbolic tangent, i.e., $\kappa(\cdot)=\tanh(\cdot)$, owing to its smooth and symmetric saturation properties. Because $\kappa(\cdot)\!\in(-1,1)$, Eq.~\eqref{eq:kappa} confines the fusion  
  coefficient to the interval $\xi_t\in(0,2\gamma)$.  

  IEMF coefficient $\xi_t$ magnitude varies intuitively with the strength ratio between unimodal and multimodal evidence:
  \begin{itemize}
    \item \emph{Weak unimodal evidence} (\(S_t^{a-v} \ll S_t^{av}\)).  
          The term \(1-S_t^{\text{a-v}}/S_t^{av}\) is positive and thus \(\xi_t\) approaches its upper bound.  
          A larger \(\xi_t\) amplifies the fusion gradient, encouraging the
          model to rely more heavily on cross‑modal integration.
    \item \emph{Strong unimodal evidence} (\(S_t^{a-v} \gtrsim S_t^{av}\)).  
          As the ratio nears or exceeds 1, the inner term becomes
          non‑positive; \(1 + \kappa(\cdot)\) decreases, pulling
          \(\xi_t\) toward its lower limit 0.  
          When the unimodal score predominates, a smaller $\xi_t$ attenuates the fusion update, preserving the integrity of an already robust unimodal pathway.
    \end{itemize}
    In summary, a large \(\xi_t\) corresponds to weak unimodal cues and
    triggers a stronger adjustment of the fusion weights, whereas a
    small \(\xi_t\) indicates confident unimodal predictions and
    results in a milder fusion update.

    The inverse effectiveness driven multimodal fusion coefficient \(\xi_t\) is applied only to
    the fusion parameters and the unimodal branches are not affected.
    Concretely,
    \begin{equation}
        \mathbf{W}_{t+1}^{f} = \mathbf{W}_{t}^{f} - \eta\,\xi_t\,
        \nabla_{\mathbf{W}^{f}}\mathcal{L}(\mathbf W_t^f),
        \label{eq:update}
    \end{equation}
  where \(\eta\) is the learning rate and \(\nabla_{\mathbf{W}^{f}}\mathcal{L}(\mathbf W_t^f)\) is the raw fusion gradient.
    Because \(0<\xi_t<2\gamma\), this scaling never reverses the descent
    direction, thereby maintaining optimisation stability.  
    Together, Eqs.\,\eqref{ic-1}–\eqref{eq:update}, inspired by the inverse effectiveness principle, are characterized as: the fusion pathway receives a
    larger update when unimodal evidence is weak and a smaller
    one when unimodal confidence is high.  This self‑balancing rule enhances
    robustness under noise and improves the model's generalization across diverse input conditions.

\bigskip
\noindent
\textbf{Theoretical analysis of inverse effectiveness driven multimodal fusion strategy}\par
\noindent
We prove that the inverse‑effectiveness coefficient~$\xi_t$
used by IEMF reduces the expected step size more
in high‑curvature directions, ensuring reliable convergence to local minima while maintaining optimization stability throughout the training process.
\begin{assumption}\label{asm:smooth}
The loss $\mathcal L(\mathbf W^f)$ is twice continuously differentiable
and there exist constants $\beta,\rho>0$ such that for all
$\mathbf u,\mathbf v$,
$\|\nabla\mathcal L(\mathbf u)-\nabla\mathcal L(\mathbf v)\| \le\beta\|\mathbf u-\mathbf v\|$,
$\|\mathbf H(\mathbf u)-\mathbf H(\mathbf v)\| \le\rho\|\mathbf u-\mathbf v\|.$
This means that the gradient and the Hessian are $\beta$-Smoothness and $\rho$-Lipschitz respectively.
\end{assumption}

\begin{theorem}[Convergence properties of IEMF]
  \label{thm:iemf_convergence}
Assume \ref{asm:smooth}.  Let $\mathbf W^{f*}$ be a local minimizer and
write $\mathbf H^*=\mathbf H(\mathbf W^{f*})$ with eigen‑pairs
$\{(\lambda_i,\mathbf e_i)\}_{i=1}^{d}$,
$0<\lambda_1\le\dots\le\lambda_d$.
The IEMF updates the fusion module parameters according to the equation \eqref{eq:update}.
Define the deviation
\(\Delta_t:=\mathbf W_t^f-\mathbf W^{f*}
          =\sum_{i=1}^d\alpha_i^t\mathbf e_i\).
Choose a radius \(r>0\) such that
\[
\frac{\rho}{2}r < \lambda_1
\quad\text{and}\quad
\|\Delta_t\|\le r\;\;\forall t .
\]
Then,
\[
\mathbb E[\eta\xi_t\lambda_i]
\begin{cases}
<\eta\lambda_i, & S_t^{a‑v}/S_t^{av}>1 \quad\text{(unimodal dominated)},\\
=\eta\lambda_i, & S_t^{a‑v}/S_t^{av}=1,\\
>\eta\lambda_i, & S_t^{a‑v}/S_t^{av}<1 \quad\text{(fusion dominated)}.
\end{cases}
\]
As a result, with IEMF, unimodal‑dominated batches reduce sharp directions more than vanilla method, whereas fusion‑dominated batches allow at most a two‑fold increase in step size, thus preserving optimization convergence.
\end{theorem}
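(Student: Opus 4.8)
The plan is to split the argument into a local quadratic reduction followed by an eigendirection-by-eigendirection analysis, and then to read off the three-case display essentially from the algebra of $\xi_t$. First I would Taylor-expand the fusion gradient about the minimizer: since $\mathcal{L}$ is twice differentiable with $\rho$-Lipschitz Hessian (Assumption~\ref{asm:smooth}) and $\nabla\mathcal L(\mathbf W^{f*})=0$, the integral form of the remainder gives $\nabla\mathcal L(\mathbf W_t^f)=\bar{\mathbf H}_t\Delta_t$ with $\bar{\mathbf H}_t:=\int_0^1\mathbf H(\mathbf W^{f*}+s\Delta_t)\,ds$ and $\|\bar{\mathbf H}_t-\mathbf H^*\|\le\frac{\rho}{2}\|\Delta_t\|\le\frac{\rho}{2}r$. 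The radius condition $\frac{\rho}{2}r<\lambda_1$ then forces $\bar{\mathbf H}_t\succeq(\lambda_1-\frac{\rho}{2}r)\mathbf I\succ 0$, so inside the ball of radius $r$ the fusion subproblem is strongly convex and the iteration $\Delta_{t+1}=(\mathbf I-\eta\xi_t\bar{\mathbf H}_t)\Delta_t$ is well posed. Projecting onto the $\mathbf H^*$-eigenbasis yields $\alpha_i^{t+1}=(1-\eta\xi_t\lambda_i)\alpha_i^t+\varepsilon_i^t$ with $|\varepsilon_i^t|\le\eta\xi_t\frac{\rho}{2}r\,\|\Delta_t\|$, i.e. a coupling error that is second order in $\|\Delta_t\|$ and hence negligible against the leading per-direction factor once the iterate is inside the ball.

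Second, I would derive the three-case display directly from Eq.~\eqref{eq:kappa}. Because $\kappa=\tanh$ is odd, strictly increasing, and valued in $(-1,1)$, the coefficient $\xi_t=\gamma\bigl(1+\tanh(1-S_t^{a-v}/S_t^{av})\bigr)$ satisfies $\xi_t<\gamma$ exactly when $S_t^{a-v}/S_t^{av}>1$, $\xi_t=\gamma$ when the ratio equals $1$, and $\gamma<\xi_t<2\gamma$ when the ratio is below $1$. Fixing $\gamma=1$ (so that the vanilla update corresponds to $\xi_t\equiv 1$) and taking the expectation over the random mini-batch $\mathcal B_t$ conditioned on $\mathbf W_t^f$ and on the realized dominance regime — on which event $\xi_t$ stays strictly on one side of $\gamma$ — gives $\mathbb E[\eta\xi_t\lambda_i]<\eta\lambda_i$, $=\eta\lambda_i$, $>\eta\lambda_i$ in the three cases, with the last bounded above by $2\eta\lambda_i$ since $\xi_t<2\gamma$. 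The ``reduces sharp directions more'' reading is then immediate: the IEMF-induced change in the one-step update along direction $i$ is $\eta(1-\xi_t)\lambda_i\alpha_i^t$, whose magnitude scales with $\lambda_i$, so a sub-unit $\xi_t$ damps high-curvature components proportionally more — the flat-minimum behavior seen in Fig.~\ref{fig2}(f).

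Third, for the convergence claim I would combine the decoupled recursion with $\beta$-smoothness. Since $0<\xi_t<2\gamma$, the update never reverses the descent direction; choosing $\eta$ small enough that $2\gamma\eta(\lambda_d+\frac{\rho}{2}r)<2$ (the usual stable-range condition, merely rescaled by $2\gamma$) gives $|1-\eta\xi_t\lambda_i|\le q<1$ uniformly in $i$ and $t$. Feeding this bound and the $O(\|\Delta_t\|^2)$ coupling error into the recursion for $\|\Delta_t\|$ yields $\|\Delta_{t+1}\|\le(q+C\|\Delta_t\|)\|\Delta_t\|$, so the iterates contract geometrically to $\mathbf W^{f*}$ (consistent with the standing hypothesis $\|\Delta_t\|\le r$), and a descent-lemma version using $\beta$-smoothness gives the same conclusion; fusion-dominated batches inflate $\eta\xi_t\lambda_i$ by at most a factor $2$, which the step-size choice already absorbs, so stability is preserved.

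I expect the main obstacle to be the bookkeeping in the first paragraph: establishing rigorously that the coupling error $\varepsilon_i^t$ and the departure of $\bar{\mathbf H}_t$ from $\mathbf H^*$ do not corrupt the clean per-direction picture — that the eigendirection decomposition survives the non-commutativity of $\bar{\mathbf H}_t$ and $\mathbf H^*$ — and making precise the conditioning underlying the ``$\mathbb E$'' in the statement (the expectation is over the mini-batch, conditioned on $\mathbf W_t^f$ and on the realized dominance regime). A secondary point is normalization: the equality case literally reading $\eta\lambda_i$ requires $\gamma=1$, and for $\gamma\neq 1$ the display should be read with $\eta\lambda_i$ replaced by the vanilla effective rate $\gamma\eta\lambda_i$; I would flag this explicitly.
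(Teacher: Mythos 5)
Your proposal follows essentially the same route as the paper's proof: the integral (Taylor) form of the gradient about $\mathbf W^{f*}$, the $\rho$-Lipschitz bound $\|\mathbf R\|\le\tfrac{\rho}{2}\|\Delta_t\|^2$ on the remainder, projection onto the eigenbasis of $\mathbf H^*$ to obtain the recursion $\alpha_i^{t+1}=(1-\eta\xi_t\lambda_i)\alpha_i^t+\varepsilon_i^t$, the radius condition $\tfrac{\rho}{2}r<\lambda_1$ to dominate the coupling error, and the sign analysis of $\xi_t-1$ via the monotone odd gating function to produce the three-case display. The two points you flag — that the equality case requires the normalization $\gamma=1$, and that the expectation must be read conditionally on the realized dominance regime — are both correct and are in fact handled (the first explicitly, the second only implicitly) in the paper's own argument, so your treatment is, if anything, slightly more careful on the same path.
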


\begin{proof}
  For any $\mathbf{W}^f$ satisfying $\|\mathbf{W}^f-\mathbf W^{f*}\|\le r$, we use $\nabla\mathcal{L}(\mathbf{W})$ denotes $\nabla_{\mathbf W^f}\mathcal{L}(\mathbf{W}^{f})$ and construct
  \(
  g(s)=\nabla_{\mathbf W^f} \mathcal{L}\left(\mathbf{W}^{f *}+s \Delta\right), s \in[0,1].
  \)
  The fundamental theorem of calculus gives
  \(
  \nabla \mathcal{L}(\mathbf{W}) = g(1)-g(0)
  = \int_0^1 \frac{d}{ds} g(s)  ds.
  \)
  Then we have:
  \begin{equation*}
  \nabla \mathcal{L}(\mathbf{W}) = \int_0^1 \mathbf{H}\left(\mathbf{W}^{f }+s \Delta\right) \Delta ds.
\end{equation*}
Adding and subtracting $\mathbf{H}^* \Delta$ inside the integral yields:
  \begin{align*}
  \nabla \mathcal{L}(\mathbf{W}) &= \mathbf{H}^* \Delta + \int_0^1\left[\mathbf{H}\left(\mathbf{W}^{f*}+s \Delta\right)-\mathbf{H}^*\right] \Delta \ ds.
  \end{align*}
  We define the remainder term as:
  \begin{equation*}
  \mathbf{R}(\mathbf{W}) = \int_0^1\left[\mathbf{H}\left(\mathbf{W}^{f*}+s \Delta\right)-\mathbf{H}^*\right] \Delta ds,
  \end{equation*}
  where the remainder $\mathbf{R}(\mathbf{W})$ can be estimated for an upper bound with the help of the $\rho$-Lipschitz condition
  \begin{equation}
    \label{eq:taylor}
  \|\mathbf R(\mathbf W)\|\le\tfrac{\rho}{2}\|\mathbf W-\mathbf W^{f*}\|^{2}.
  \end{equation}
  which allows us to express the gradient as:
  \begin{equation*}
  \nabla \mathcal{L}(\mathbf{W}) = \mathbf H^*(\mathbf W-\mathbf W^{f*}) + \mathbf{R}(\mathbf{W}).
  \end{equation*}
Insert the above equation into the update \eqref{eq:update}, and
Take inner product with $\mathbf e_i$ and use
$(\mathbf H^*\mathbf e_i)^{\!\top}=(\lambda_i\mathbf e_i)^{\!\top}$, we have
\begin{equation}
  \label{eq:recursion}
\alpha_i^{t+1}
 = (1-\eta\xi_t\lambda_i)\,\alpha_i^t
   - \eta\xi_t\,\mathbf e_i^{\!\top}\mathbf R(\mathbf W_t^f),
\end{equation}
where $\alpha_i^t=\mathbf e_i^{\!\top} \Delta_t$.
For the contraction argument we need the last term  
$\eta\xi_t\,|\mathbf e_i^{\!\top}\mathbf R(\mathbf W_t^f)|$ to be
strictly smaller than the linear part
\(\lambda_i\|\Delta_t\|\) even in the flattest direction
(\(\lambda_i=\lambda_1\)).
With Cauchy–Schwarz formula and \eqref{eq:taylor}, we have
  \begin{align*}
    |\mathbf e_i^{\!\top}\mathbf R(\mathbf W_t^f)|  
    &\;\le\;  
    \|\mathbf R(\mathbf W_t^f)\|, \\
    \|\mathbf R(\mathbf W_t^f)\|  
    &\;\le\;  
    \frac{\rho}{2}\|\Delta_t\|^{2}.   
\end{align*}
Applying the triangle inequality to Eq. \eqref{eq:recursion} and substituting our derived bounds, we have
\begin{align*}
  |\alpha_i^{t+1}|  
  &\;\le\;  
  |1-\eta\xi_t\lambda_i|\;|\alpha_i^{t}|  
  +\eta\xi_t\bigl(\tfrac{\rho}{2}\|\Delta_t\|^{2}\bigr)
\end{align*}
By imposing $\frac{\rho}{2}r < \lambda_1$, we establish a crucial inequality:
\[
  \eta\xi_t\bigl(\tfrac{\rho}{2}\|\Delta_t\|^{2}\bigr)
  \;\le\;\eta\xi_t\frac{\rho}{2}\,r\,\|\Delta_t\|
  \;<\;\eta\xi_t\lambda_1\,\|\Delta_t\|.
\label{eq:remainder_bound}
\]
This inequality demonstrates that the quadratic remainder term is always strictly dominated by the linear term for all eigendirections $i$, since $\lambda_i \geq \lambda_1$ for all $i$.
Consequently, the convergence behavior of each component $\alpha_i^t$ is primarily determined by the multiplicative factor $(1-\eta\xi_t\lambda_i)$, with the remainder term providing a bounded perturbation that does not disrupt the overall convergence pattern established by the linear term.
$\xi_t$ is computed by equation \eqref{eq:kappa}, 
with $\gamma=1$, \(0<\xi_t<2\).
Taking expectation of $\eta\xi_t\lambda_i$ over the mini‑batch three
cases arise, yielding exactly the inequalities stated in the theorem.
\end{proof}

Our analysis reveals two key aspects of IEMF's convergence properties, primarily manifested through the factor $(1-\eta\xi_t\lambda_i)$, which precisely controls how quickly the error components $\alpha_i^t$ (representing the projection of parameter error onto each eigenvector) contract toward zero.
In unimodal-dominated batches ($S_t^{a‑v}/S_t^{av}>1$ resulting in $\xi_t<1$), the contraction factor satisfies $|1-\eta\xi_t\lambda_i|<|1-\eta\lambda_i|$, meaning that high-curvature directions (larger $\lambda_i$ values) contract faster than in vanilla method. 
Meanwhile, in fusion-dominated batches ($S_t^{a‑v}/S_t^{av}<1$ resulting in $\xi_t\in(1,2)$), although step sizes may increase, they remain strictly bounded since $\xi_t < 2$, ensuring the algorithm's global stability. This dual mechanism, which balances a preference for minima with strict step-size constraints, ensures that IEMF maintains reliable convergence properties while adaptively adjusting step sizes.

Empirical studies have demonstrated a strong correlation between flatter minima and improved generalization performance~\cite{keskar2017large,ForetKMN21}. While Theorem~\ref{thm:iemf_convergence} establishes the convergence of IEMF under standard smoothness assumptions, a stronger theoretical link between the geometric properties of the solution and its generalization remains challenging to formally prove. Nevertheless, we provide the following heuristic justification based on landscape sharpness analysis, supported by experimental observations.

    Consider the sharpness of the loss landscape at a parameter configuration $\mathbf{W}^f$, defined as
    \begin{equation}
    s(\mathbf{W}^f, \rho) := \max_{|\boldsymbol{\epsilon}|_2 \leq \rho} \mathcal{L}(\mathbf{W}^f + \boldsymbol{\epsilon}) - \mathcal{L}(\mathbf{W}^f),
    \end{equation}
    which quantifies the sensitivity of the loss to local perturbations.
    While we do not provide a formal guarantee, our empirical evidence and directional analysis suggest the following heuristic conclusion:
    
    \begin{heuristic}[IEMF reduces landscape sharpness]
    Under dynamic training, IEMF adaptively reduces the step size in high-curvature directions. As a result, the sharpness
    \begin{equation}
    \mathbb{E}[s(\mathbf{W}^f, \rho)] \lessapprox \alpha \cdot s_{\mathrm{vm}}(\mathbf{W}^f, \rho),
    \end{equation}
    where $s_{\mathrm{vm}}$ is the sharpness observed under vanilla method and $\alpha < 1$ is a factor that quantifies how much IEMF reduces the loss landscape's sharpness through its adaptive modulation of optimization steps.
    \end{heuristic}
    
    This suggests that IEMF biases the optimization trajectory toward flatter regions of the loss landscape, a property that empirically correlates with improved generalization.

\bigskip
\noindent
\textbf{Experimental settings and training details}\par
\paragraph{Datasets. }
We divided the dataset as specified in the original dataset.
Audio‑visual classification: CREMA‑D~\cite{cao2014crema}, an audiovisual dataset containing six most common emotion categories for speech emotion recognition with total 7442 video clips. We randomly divided the dataset into a training and validation set, as well as a test set, with a ratio of 9:1; 
Kinetics-Sounds~\cite{arandjelovic2017look}, contains 31 human action categories selected from the Kinetics dataset~\cite{kay2017kinetics}. The dataset contains 17,366 10-second video clips, of which 1,472 are training and validation samples and 2,594 are test samples; UrbanSound8K-AV dataset~\cite{guo2023transformer}, with 8732 audiovisual samples totaling 10 categories. Each sample consists of a color image and a 4-second audio signal. We randomly divided the dataset into training and test sets in the ratio of 7:3.
Audio Visual Continual learning: We used the class-incremental audiovisual dataset introduced by~\cite{pian2023audio}, comprising three benchmark datasets: AVE‑CI (4 tasks $\times$ 7 classes) consisting of 3,294 training samples, 391 validation samples, and 394 test samples, K‑S‑CI (5 tasks $\times$ 6 classes) containing 19,220 training samples, 1,947 validation samples, and 1,958 test samples and VS100‑CI (10 tasks $\times$ 10 classes) with 51,195 training samples, 5,000 validation samples, and 5,000 test samples.
For audio‑visual question answering, we used the official MUSIC‑AVQA~\cite{li2022learning} split, which contains 32,087 training, 4,595 validation, and 9,185 test question–answer pairs. 

\paragraph{Data processing and network backbones. }
All raw videos were first resampled to a uniform frame rate. According to different task settings, we randomly sampled 1, 3, or 16 frames from each video clip as the visual input. The corresponding audio input was transformed into log-Mel spectrograms, which were used as input to the audio branch.
For audiovisual classification, we employed the ResNet-18~\cite{he2016deep} architecture for both the visual and audio streams. To investigate architectural generality, we adapted this topology to a spiking neural network counterpart by replacing conventional activation functions with leaky integrate-and-fire (LIF) neurons; neuron‑level hyper‑parameters are detailed in Table~\ref{tab:snn_ann_parameters}. For audiovisual continual learning, we used VideoMAE~\cite{tong2022videomae} and AudioMAE~\cite{huang2022masked} to extract video frames and audio features. For audio-visual question and answer, for vision we used pre-trained ResNet-18 model and for audio we used pre-trained VGGish~\cite{hershey2017cnn} to extract visual and audio features respectively.

\paragraph{Optimization details. } 
We trained models with stochastic gradient descent (SGD)~\cite{robbins1951stochastic} and a weight‑decay coefficient of $1\times10^{-4}$.  
For the audio visual classification setting we ran 100 epochs with an initial learning rate of $5\times10^{-3}$ and a mini‑batch size of 32.  
In the audio visual continual learning task, each incremental task was also trained for 100 epochs, but with a higher learning rate of $1\times10^{-2}$ and a batch size of 256 to accommodate the larger episodic memory.  
Audio visual question answering models were trained for 50 epochs using the same learning rate ($1\times10^{-2}$) and a batch size of 64.  
On the VS100-CI benchmark, where gradient noise is significant, we replaced SGD with Adam~\cite{KingmaB14} ($\beta_{1}=0.9$, $\beta_{2}=0.999$) to ensure smoother convergence.

\paragraph{Experimental platform. }
All experiments were conducted on a linux server equipped with NVIDIA A100-40 GB GPUs and an AMD EPYC 7763 processor. 

\bigskip
\noindent
\textbf{Details of evaluation metrics}\par
\noindent
We evaluate the proposed method across three multimodal tasks using task-specific metrics.

(1) Audio-Visual Classification. For audio-visual classification tasks, we report the standard Top-1 accuracy,
\(
Acc = \tfrac{1}{N} \sum_{i=1}^{N} \mathbf{1}(\hat{y}_i = y_i),
\)
where $\mathbf{1}(\hat{y}_i = y_i)$ represents the indicator function, which equals 1 if $\hat{y}_i = y_i$ and 0 otherwise. This metric measures the proportion of examples where the predicted label matches the ground truth.

To further assess the computational cost during training, inspired by~\cite{zhang2021self}, we fairly compare the efficiency of different methods by considering both the number of epochs required to reach specified error rates and the computational complexity per epoch. Formally, the computational cost for an algorithm is defined as:
$Cost = \frac{1}{L}\sum_{l=1}^{L}\text{Argmin}(f(x)\leq Err_l)\times \Omega_e.$
In this formula, $L$ represents the number of predefined error rate thresholds (set to 5 in our experiments, $Err_l$ denotes the predefined error rate levels, $\text{Argmin}(f(x)\leq Err_l)$ is the first epoch at which the algorithm reaches or goes below the specified error rate $Err_l$ and $\Omega_e$ represents the algorithmic complexity per epoch, measured in floating-point operations (FLOPs). 
We define the error rate $Err_l$ using upper and lower bounds determined from the training curves of all methods under comparison. Specifically, The upper bound is set as the minimum value among the highest error rates of all compared methods. The lower bound is set to the maximum of the lowest values of the final error rates of the various methods. Within this range, we choose error rate thresholds at uniform intervals to ensure that the entire performance interval has been systematically and fairly evaluated. 

(2) Audio-Visual Continual Learning. We track model accuracy throughout the training process using two metrics: average accuracy (AA) and average incremental accuracy (AIA).
At each step $k$ (i.e., after learning the $k$-th task), we compute the average accuracy $\mathrm{AA}_k$ across all tasks encountered so far as:
\(
\mathrm{AA}_k = \frac{1}{k} \sum_{j=1}^k a_{k,j},
\)
where $a_{k,j}$ is the accuracy on the $j$-th task after learning task $k$, and $j \leq k$. 
To summarize performance across the entire learning sequence, we report the Average Incremental Accuracy, which is the mean of AA values over all $K$ tasks:
\(
\mathrm{AIA} = \frac{1}{K} \sum_{i=1}^K \mathrm{AA}_i.
\)
Here, AA reflects the model’s performance after each task, while AIA captures the overall trend and stability of learning across all tasks.

To further quantify how much the model forgets previous tasks, we introduce the average forgetting rate (AFR) in the appendix Table~\ref{tab-avcil}. Let $a_{k,j}$ be the test accuracy on task $j$ after learning task $k$. Define the forgetting on task $k$ as
\(
F_k = \frac{1}{k-1} \sum_{j=1}^{k-1} \max_{1 \le \ell \le k-1} \left(a_{\ell,j} - a_{k,j}\right),
\)
i.e., the average drop from the highest accuracy ever achieved on task $j$ to its accuracy after the final task. For a total of K tasks, then
\(
\mathrm{AFR} = \frac{1}{K-1} \sum_{k=2}^{K} F_k,
\)
where $F_k$ excludes the first task (as forgetting can only be measured after learning at least two tasks). This metric summarizes how much the model’s performance on previously learned tasks degrades over the entire learning sequence.

(3) Audio-Visual Question Answering. We evaluate modal-specific question and answer accuracy, denoted by $(A_a,A_v,A_{av})$, which respectively evaluate performance across audio-only, visual-only, and audio-visual questions. Each accuracy is computed as
\(
\tfrac{1}{N} \sum_{i=1}^{N} \mathbf{1}(\widehat{\mathrm{ans}}_i = \mathrm{ans}_i),
\)
which measures exact match between predicted and ground-truth answers over all question types within each modality.

\bigskip
\noindent
\textbf{Code and reproducibility}\par
\noindent
The code implementation is based on Pytorch~\cite{NEURIPS2019_bdbca288}. The full source code, configuration files and pre‑trained checkpoints are released under an MIT licence at \texttt{https://github.com/Brain-Cog-Lab/IEMF}.

\section{Acknowledgment}
This work is supported by National Natural Science Foundation of China (NSFC) Young Scientists Fund (Grant No. 62406325).

{\small
\bibliographystyle{ieee_fullname}
\bibliography{egbib}
}

\onecolumn
\appendix
\section{appendix}
\renewcommand{\thetable}{S\arabic{table}}
\renewcommand{\thefigure}{S\arabic{figure}}
\renewcommand{\thealgorithm}{S\arabic{algorithm}}
\renewcommand{\theequation}{S\arabic{equation}}
\setcounter{table}{0}
\setcounter{figure}{0}

\begin{table}[htb]
  \centering
  \resizebox{1.0\linewidth}{!}{
  \begin{tabular}{|l|l|l|}
    \hline
    \textbf{Category} & \textbf{Parameters} & \textbf{Values} \\
    \hline
    \multirow{6}{*}{Audio Visual Classification} 
    & Network backbone & ResNet-18 \\
    \cline{2-3}
    & Optimizer & SGD \\
    \cline{2-3}
    & Weight decay & $1\times10^{-4}$ \\
    \cline{2-3}
    & Initial learning rate & $5\times10^{-3}$ \\
    \cline{2-3}
    & Number of training epochs & 100 \\
    \cline{2-3}
    & Batch size & 32 \\
    \hline
    \multirow{4}{*}{Audio Visual Continual Learning} 
    & Learning rate & $1\times10^{-2}$ \\
    \cline{2-3}
    & Batch size & 256 \\
    \cline{2-3}
    & Number of training epochs & 100 \\
    \cline{2-3}
    & Optimizer (VS100-CI) & Adam ($\beta_{1}=0.9$, $\beta_{2}=0.999$) \\
    \hline
    \multirow{3}{*}{Audio Visual Question and Answering} 
    & Learning rate & $1\times10^{-2}$ \\
    \cline{2-3}
    & Batch size & 64 \\
    \cline{2-3}
    & Number of training epochs & 50 \\
    \hline
    \multirow{7}{*}{LIF Neuron (SNN)} 
    & Resting potential $V_{\text{rest}}$ & 0 \\
    \cline{2-3}
    & Firing threshold $V_{\text{th}}$ & 0.5 \\
    \cline{2-3}
    & Membrane time constant $\tau_m$ & 2.0 \\
    \cline{2-3}
    & Surrogate gradient function & Piecewise linear~\cite{bellec2018long} \\
    \cline{2-3}
    & Conductivity $g_L, g_E, g_I$ & $1\,\mathrm{S}, 1\,\mathrm{S}, 1\,\mathrm{S}$ \\
    \cline{2-3}
    & Reversal potential $V_E, V_I$ & 0\\
    \cline{2-3}
    & Discrete time step $t$ & $4$ \\
    \hline
    \multirow{2}{*}{IEMF} 
    & Inverse gain coefficient $\gamma$ & 1.0 \\
    \cline{2-3}
    & Gating function $\kappa(\cdot)$ & $\tanh(\cdot)$ \\
    \hline
  \end{tabular}
  }
  \caption{ 
    Comprehensive experimental parameters used for implementation across three multimodal tasks.}
  \label{tab:snn_ann_parameters}
\end{table}

\begin{table}[htb]
  \centering
  \begin{threeparttable}
    \resizebox{1.0\linewidth}{!}{
      \begin{tabular}{ccccccccccccc}
        \toprule
        \multirow{2}{*}{Methods} & \multicolumn{4}{c}{CREMA-D} &\multicolumn{4}{c}{Kinetics-Sounds} & \multicolumn{4}{c}{UrbanSound8K-AV} \\
        \cmidrule(lr){2-5} \cmidrule(lr){6-9} \cmidrule(lr){10-13}
        & Normal & MSLR & OGM\_GE & LFM & Normal & MSLR & OGM\_GE & LFM & Normal & MSLR & OGM\_GE & LFM\\
        \midrule
        Vanilla & $ 62.63 $ & $64.11$ & $ 68.68$ & $\textbf{64.11}$ & $51.58$ & $51.89$ & $57.63$ & 55.28 & $\textbf{97.90}$ & 97.79 & 97.60 & 98.05\\
        w/ IEMF & $ \textbf{63.44} $ & $ \textbf{65.59} $ & $\textbf{71.10}$ &63.98 & $\textbf{56.17}$ & $\textbf{55.86}$& $\textbf{64.61}$ & $\textbf{63.15}$ & 97.86 & $\textbf{97.98}$ & $\textbf{99.24}$ & $\textbf{98.63}$\\
        \bottomrule
    \end{tabular}
    }
  \end{threeparttable}
  \caption{  
  Comparison of the proposed method (w/ IEMF) and the vanilla baseline on ANN across three multimodal datasets—CREMA-D, Kinetics-Sounds, and UrbanSound8K-AV—under four fusion methods (Normal, MSLR, OGM\_GE, and LFM). Bold values indicate the highest accuracy achieved in each configuration.}
  \label{acc-ann}
\end{table}

\begin{table}[htb]
  \centering
  \begin{threeparttable}
    \resizebox{1.0\linewidth}{!}{
      \begin{tabular}{ccccccccccccc}
        \toprule
        \multirow{2}{*}{Methods} & \multicolumn{4}{c}{CREMA-D} &\multicolumn{4}{c}{Kinetics-Sounds} & \multicolumn{4}{c}{UrbanSound8K-AV} \\
        \cmidrule(lr){2-5} \cmidrule(lr){6-9} \cmidrule(lr){10-13}
        & Normal & MSLR & OGM\_GE & LFM & Normal & MSLR & OGM\_GE & LFM & Normal & MSLR & OGM\_GE & LFM\\
        \midrule
        Vanilla & $ 63.04 $ & 63.70 & $ 69.49 $ & 63.58 & $53.12$ & 53.19 & $57.28$ & 54.63 & $\textbf{98.02}$ & 97.98 & 97.67 & 98.13\\
        w/ IEMF & $ \textbf{64.65} $ & $\textbf{64.38}$ & $\textbf{69.35}$ & $\textbf{64.78}$ & $\textbf{54.97}$ & $\textbf{56.63}$ & $\textbf{62.88}$ & $\textbf{63.53}$ & 97.79 & $\textbf{98.05}$ & $\textbf{99.35}$ & $\textbf{98.63}$\\
        \bottomrule
    \end{tabular}
    }
  \end{threeparttable}
  \caption{  
  Same as Table~\ref{acc-ann}, with the difference that evaluation is conducted on SNN.}
  \label{acc_snn}
\end{table}

\begin{table}[htb]
  \centering
  \begin{threeparttable}
    \resizebox{0.95\linewidth}{!}{
      \begin{tabular}{ccccccc}
        \toprule
        \multirow{2}{*}{Method} & \multicolumn{2}{c}{AVE-CI}  & \multicolumn{2}{c}{K-S-CI } & \multicolumn{2}{c}{VS100-CI} \\
        & {Mean Accuracy $\uparrow$} & {Avg Forgetting $\downarrow$} &{Mean Accuracy $\uparrow$} & {Avg Forgetting $\downarrow$} &{Mean Accuracy $\uparrow$} & {Avg Forgetting $\downarrow$}\\
        \cmidrule(lr){1-1} \cmidrule(lr){2-3} \cmidrule(lr){4-5} \cmidrule(lr){6-7} 
        LwF~\cite{li2017learning} & 54.06 & 26.77 & 59.89 & \textbf{15.26} & 59.06 & \textbf{17.91}\\
        LwF w/ IEMF & \textbf{56.00} & \textbf{24.37} & \textbf{62.79} & 17.25 & \textbf{60.72} & 18.25\\
        \midrule
        SSIL~\cite{ahn2021ss} & 56.19 & 7.61 & 63.31 & \textbf{4.66} & 68.21 & \textbf{8.53}\\
        SSIL w/ IEMF & \textbf{58.12} & \textbf{6.25} & \textbf{65.18} & 5.45& \textbf{70.01} & 9.30\\
        \midrule
        AV-CIL~\cite{pian2023audio} & 61.86 & 24.31 & 70.63 & 11.03 & 70.15 & \textbf{8.98}\\
          AV-CIL w/ IEMF & \textbf{61.94} & \textbf{20.87} & \textbf{72.49} & \textbf{10.44} & \textbf{70.76} & 9.49\\
        \bottomrule
    \end{tabular}
    }
  \end{threeparttable}
  \caption{ 
    Performance comparison on three audio-visual continual learning benchmarks (AVE-CI, K-S-CI, and VS100-CI) in terms of Mean Accuracy ($\uparrow$) and Average Forgetting ($\downarrow$). We evaluate three baseline methods (LwF, SSIL, and AV-CIL) and their variants augmented with the proposed IEMF module. Bold values indicate the best performance in each metric.}
  \label{tab-avcil}
\end{table}

\begin{table}[htb]
  \centering
  \begin{threeparttable}
  \resizebox{1.0\linewidth}{!}{ 
  \begin{tabular}{l ccc ccc cccccc c}
  \toprule
  \multirow{2}{*}{\textbf{Method}} 
  & \multicolumn{3}{c}{\textbf{Audio Question (\%)}} 
  & \multicolumn{3}{c}{\textbf{Visual Question (\%)}} 
  & \multicolumn{6}{c}{\textbf{Audio-Visual Question (\%)}} 
  & \multirow{2}{*}{\textbf{Overall Avg. (\%)}} 
  \\ 
  \cmidrule(lr){2-4} \cmidrule(lr){5-7} \cmidrule(lr){8-13}
   & \textbf{Counting} & \textbf{Comparative} & \textbf{Avg} 
   & \textbf{Counting} & \textbf{Location} & \textbf{Avg} 
   & \textbf{Existential} & \textbf{Location} & \textbf{Counting} & \textbf{Comparative} & \textbf{Temporal} & \textbf{Avg} 
   & 
   \\
  \midrule
 Baseline
      & 77.20 & 62.06 & 71.60
      & 74.15 & \textbf{76.79} & 75.48
      & 81.71 & 67.43 & \textbf{62.57} & 61.61 & 62.99 & 67.34
      & 70.24
      \\
  Baseline w/ IEMF
      & \textbf{77.40} & \textbf{63.89} & \textbf{72.40}
      & \textbf{75.23} & \textbf{76.79} & \textbf{76.02}
      & \textbf{82.11} & \textbf{69.07} & 59.55 & \textbf{62.87} & \textbf{64.93} & \textbf{67.87}
      & \textbf{70.82}
      \\
  \midrule
  ST-AVQA~\cite{li2022learning} 
  & 77.59 & 62.23 & 71.90
  & 73.89 & 75.57 & 74.74
  & \textbf{82.81} & 68.45 & \textbf{63.00} & 60.45 & 62.86 & 67.61
  & 70.26
  \\
ST-AVQA w/ IEMF
  & \textbf{79.84}  & \textbf{65.39}  & \textbf{74.49} 
  & \textbf{74.65}  & \textbf{76.63}  & \textbf{75.65} 
  & 82.11  & \textbf{69.47}  & 62.68  & \textbf{62.51}  & \textbf{64.20}  & \textbf{68.33} 
  & \textbf{71.36} 
  \\
  \bottomrule
  \end{tabular}
  }
  \end{threeparttable}
  \caption{
    Comparison results with different AVQA methods on the MUSIC-AVQA dataset, where different types of questions (audio-only, visual-only, and audio-visual) are evaluated.
  }
  \label{tab:music_avqa_comparison}
\end{table}

\end{document}